\newtheorem{theorem}{Theorem}
\newtheorem{corollary}{Corollary}%[theorem]
\newtheorem{proposition}{proposition}
\newtheorem{remark}{remark}
\title{FedMAE: Federated Self-Supervised Learning with One-Block Masked Auto-Encoder}
\author{Nan Yang, Xuanyu Chen, Charles Z. Liu, Dong Yuan, Wei Bao and Lizhen Cui \\
\textit{Faculty of Engineering, The University of Sydney, Australia}\\
\{n.yang, xuanyu.chen, zhenzhong.liu, dong.yuan, wei.bao\}@sydney.edu.au \\
\textit{School of Software Engineering, Shandong University, China}\\
clz@sdu.edu.cn
}
\date{}
\begin{document}
\maketitle

\begin{abstract}
Latest federated learning (FL) methods started to focus on how to use unlabeled data in clients for training due to users' privacy concerns, high labeling costs, or lack of expertise. However, current Federated Semi-Supervised/Self-Supervised Learning (FSSL) approaches fail to learn large-scale images because of the limited computing resources of local clients. In this paper, we introduce a new framework FedMAE, which stands for Federated Masked AutoEncoder, to address the problem of how to utilize unlabeled large-scale images for FL. Specifically, FedMAE can pre-train one-block Masked AutoEncoder (MAE) using large images in lightweight client devices, and then cascades multiple pre-trained one-block MAEs in the server to build a multi-block ViT backbone for downstream tasks. Theoretical analysis and experimental results on image reconstruction and classification show that our FedMAE achieves superior performance compared to the state-of-the-art FSSL methods.

\end{abstract}

\section{Introduction}
Self-supervised learning (SSL) has attracted extensive research attention for learning representations without costly data labeling. The common practice in computer vision is to design proxy tasks to allow visual representation learning from unlabeled images \cite{doersch2015unsupervised,noroozi2016unsupervised,zhang2016colorful,gidaris2018unsupervised}. The current state-of-the-art SSL methods integrate contrastive learning with Siamese networks to increase the similarity between two augmented versions of images \cite{wu2018unsupervised,chen2020simple,he2020momentum,grill2020bootstrap,chen2021exploring}, or use autoencoder to reconstruct the input images \cite{dosovitskiy2020image,bao2021beit,he2022masked}, while all these solutions assume training images are centrally available in cloud servers.
In the real world, however, decentralized image data are exploding in growth, and data collected by different parties may not be centralized due to data privacy regulations.

Existing approaches cannot well use decentralized unlabeled data to learn a generic representation while preserving data privacy. Federated Learning (FL) \cite{chen2019communication,mcmahan2017communication}is an emerging distributed training technique in which several clients collectively train a global model through coordinated communication.
%and it has been widely implemented in a variety of applications. 
However, traditional federated learning \cite{kairouz2021advances} relies on data labels, while this assumption may be excessive as labeling data is expensive, time-consuming, and indispensable to the participation of domain experts. Decentralized data is normally unlabeled, non-independent and identically distributed (non-IID), which are critical challenges for learning from multiple clients \cite{li2020federated,zhao2018federated}. Motivated by these practical scenarios, many SSL methods are used in federated learning, such as FedU \cite{zhuang2021collaborative}, FedEMA \cite{zhuang2021divergence} and Orchestra \cite{DBLP:conf/icml/LubanaTKDM22}. Due to the limited computation resource of client devices, these approaches use small models, e.g., ResNet18, to train small images. The state-of-the-art methods have not revealed how lightweight clients learn unlabeled large-scale images effectively.
%it is difficult to use small models but have a good performance on large-scale images without labels, so current efforts have not yet revealed how multiple clients learn large-scale images without labels effectively.

%Although convolutional networks were dominant over the last decade, the great achievements of transformer in vision cannot be ignored. 
Recent advancement of Transformers has great achievements in computer vision \cite{dosovitskiy2020image,he2022masked}. However, the biggest problem with applying Transformers in federated learning is a large weight, which is difficult to deploy on small computing devices for training large images. In this paper, we propose a simple, effective, and scalable framework, FedMAE, which stands for Federated Masked AutoEncoder. Our FedMAE only deploys the MAE with a one-block encoder and a one-block decoder as our pre-training model on the local clients and masks the random patches in the input image and then reconstructs the missing patches in the pixel space. Although the one-block model learns limited information and performs poorly in downstream tasks, we found that cascading many pre-trained one-block MAEs together (without weights aggregation in traditional FL) can achieve stunning performance in downstream tasks. With this design, FedMAE can achieve a win-win scenario, i.e., it improves model accuracy by training large images on lightweight client devices and allowing more distributed or mobile clients to participate in training asynchronously. To get a qualitative sense of our FedMAE performance, see Figure \ref{fig1}, which compares the reconstruction effect of the un-pretrained five-block MAE and the five-block MAE pre-trained by FedMAE. %This can reduce overall pre-training time and likewise reduce power consumption and enable us easily scale our FedMAE to large models in the downstream tasks.

In summary, the main contributions of this work are summarized as follows:
\begin{itemize}
\item We propose FedMAE, a framework that realizes asynchronous training of large-scale unlabeled images in FL. To the best of our knowledge, we are the first to apply Transformer to FL, which shows that cascading multiple pre-trained one-block MAEs can form a high-performance pre-trained ViT model for downstream tasks.

\item 
We model an approximate mapping for FedMAE based on the equivalence between data corruption and feature interference and give an approximate analytical solution to optimal reconstruction, which depends only on the data, revealing the data impact of the optimal reconstruction in federated autoencoding learning independent of the backbone.

\item We pre-train FedMAE with large-scale datasets in FL and conduct extensive fine-tuning experiments on downstream tasks, such as image reconstruction and image classification. The results show FedMAE has huge improvement over state-of-the-art baselines.

%\item Based on the data, we give a data-only resolution form for the optimal solution of autoencoder in a single client and global perspective respectively, revealing the data impact of the optimal codec reconstruction independent of the backbone. 
%\item We also present the impact of client and variant versions on training and give upper exact bounds on the number of variant versions. The proposed data-dependent optimal autoencoder parsing solution can be used as a reference anchor for the optimal solution to discuss the feasibility of optimization embodied by the data itself in a model-independent manner.

\begin{figure}
  \centering
  \includegraphics[width=1.0\textwidth]{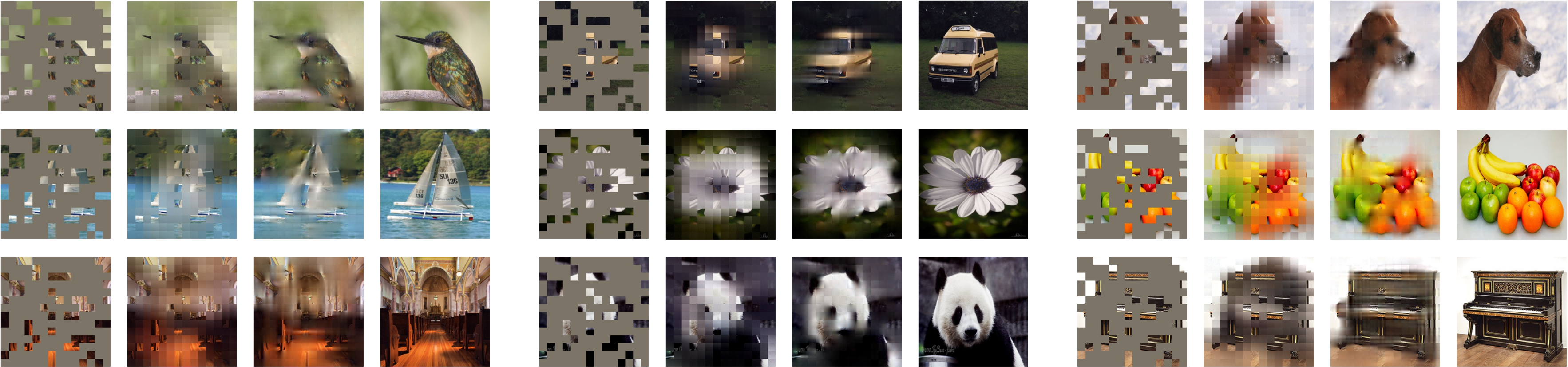}
  \caption{Comparison of reconstructions of ImageNet validation images under 75\% masking ratio. From left to right of each quaternion, we show the masked image, the reconstruction of the un-pretrained five-block MAE, the reconstruction of the five-block MAE pre-trained by FedMAE, and the ground-truth image.}
  \label{fig1}
\end{figure}

\end{itemize}

\section{Related Work}
\subsection{Self-Supervised Learning and Transformers}
Self-Supervised Learning is a recent paradigm in unsupervised representation learning, especially discriminative SSL, which exploits the invariance of different views after data augmentation, making the representations of positive pairs similar while pushing negative pairs away. In order to obtain sufficient informative negative examples in contrastive learning, the methods usually rely on large memory banks such as Moco \cite{he2020momentum}, or large batch size such as SimCLR\cite{chen2020simple}. BYOL \cite{grill2020bootstrap} and SimSiam \cite{chen2021exploring} further minimize the need for negative samples by using a variety of strategies to prevent representation collapse. 

Pre-training vision Transformers has attracted great attention recently due to the data-hungry issue. ALBERT utilizes two parameter reduction techniques that overcome the primary challenges of scaling pre-trained models, resulting in a light and efficient language representation model \cite{lan2019albert}. ViT \cite{dosovitskiy2020image} predicts the 3-bit mean color of the masked patches, which is the most straightforward translation of BERT \cite{devlin2018bert} from NLP to CV. BEIT \cite{bao2021beit} uses image patches as input and visual tokens are obtained by discrete VAE instead of clustering. The current most popular way of pretraining Transformer is MAE \cite{he2022masked} and MaskFeat \cite{wei2022masked}, which utilize the idea of self-supervised learning to predict features of the masked area.

\subsection{Federated Semi-Supervised/Self-Supervised Learning}
%Federated Learning (FL) is an emerging distributed training method in which several clients train a global model collectively through coordinated communication, has been widely implemented in a variety of applications. 
%In vision, pioneering works of predictive tasks include predicting the patch orderings, colorization, predicting rotation angles, and identifying the position of masked patches, which rarely yield good results due to the complexity of input images. In contrast, 

%In the scenario of federated learning, due to the resource limitation of computation and storage of local clients, the current popular FSSL algorithms are as followed. 
Semi-supervised learning and self-supervised learning are two methods in federated learning to utilize unlabelled data in client devices for training.
Federated semi-supervised learning has to rely on some labeled data either in the server or in the devices (or both) to guide unlabeled data in training the global model. RSCFed \cite{liang2022rscfed} relies on clients with labeled data to assist in learning unlabeled data in other clients. FedMatch \cite{jeong2020federated} introduced the inter-client consistency that aims to maximize the agreement across models trained at different clients. 

Federated self-supervised learning normally does not rely on any labeled data but aims at pre-training a high-performance model for downstream tasks. FedU \cite{zhuang2021collaborative} is based on the self-supervised method BYOL \cite{grill2020bootstrap}, which aims for representation learning. FedEMA \cite{zhuang2021divergence} is an upgraded version of FedU, which adaptively updates online networks of clients with EMA of the global model. Orchestra \cite{DBLP:conf/icml/LubanaTKDM22} relies on high representational similarity for related samples and low similarity across different samples.

\begin{figure}
  \centering
  \includegraphics[width=1.0\textwidth]{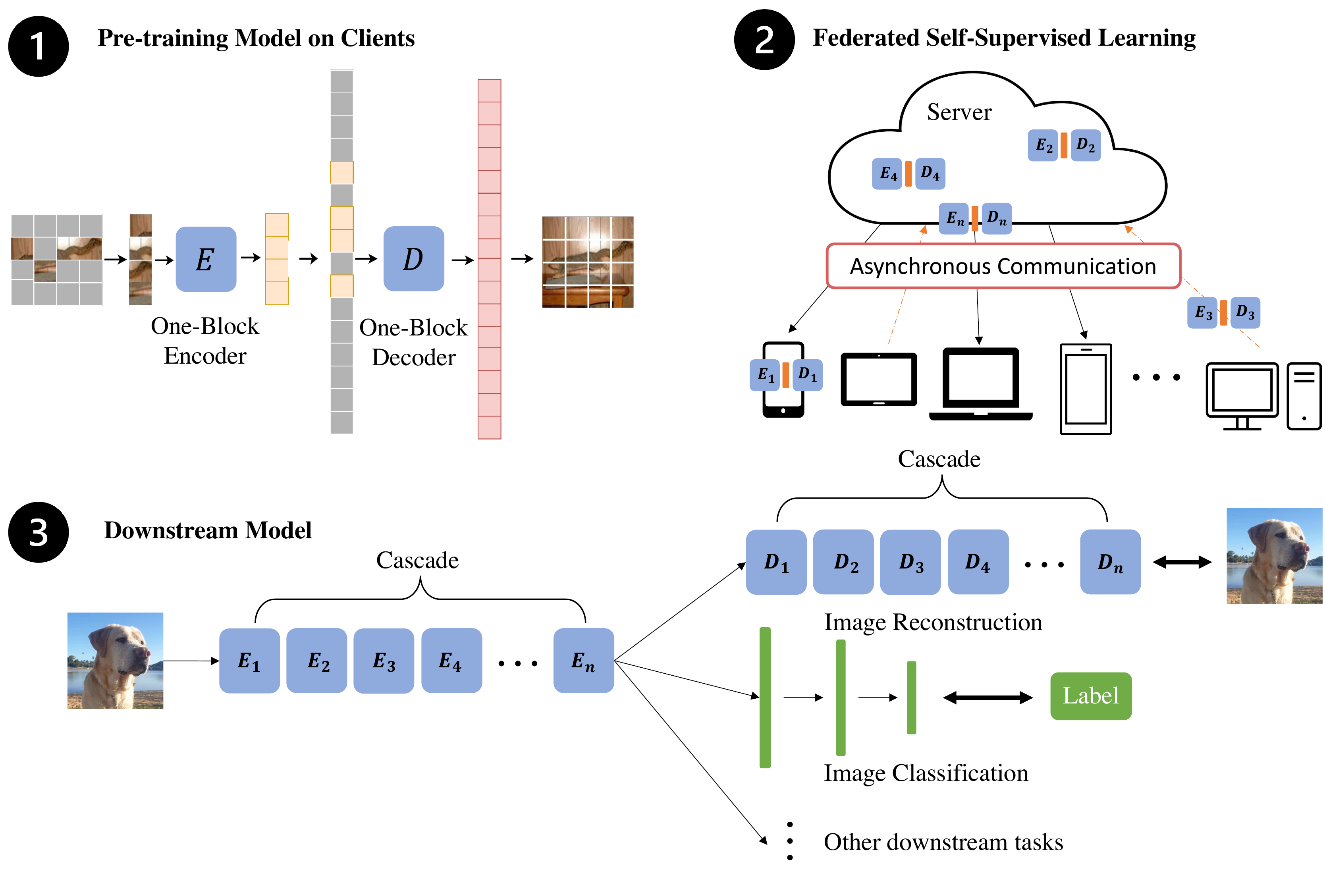}
  \caption{Overview of FedMAE Architecture. \textcircled{1} We use a tiny MAE with a one-block encoder and a one-block decoder as our pre-training model on clients.
\textcircled{2} Pre-training models in FL can be trained on different clients asynchronously without average aggregation.
\textcircled{3} After pre-training in FL, pre-trained models are cascaded in the server to build a multi-block ViT for downstream tasks.}
  \label{fig2}
\end{figure}

\section{Approach}
A masked autoencoder is a simple autoencoding method that reconstructs the original signal given its partial observation. To enable more local clients to participate in training, FedMAE's pre-training model contains only one block of encoder and decoder, allowing many devices with insufficient computing power, e.g., mobile phones and Internet of Things devices, to participate in training on large images. Like all autoencoders, our one-block encoder maps the observed signal to a latent representation, and the one-block decoder reconstructs the original signal from the latent representation when pre-training. For the downstream model of FedMAE, we adopt a cascade design that allows multiple pre-trained one-block MAE connected in sequence to build a ViT model with multi-blocks for downstream training. Figure \ref{fig2} illustrates the idea, introduced next.

\subsection{Pre-trained Model}
In FedMAE, we build a tiny MAE with a one-block encoder and a one-block decoder as the pre-trained model for client devices. Our encoder is a ViT but applied only on visible, unmasked patches. Similar to a traditional ViT, our encoder embeds patches using a linear projection with additional positional embeddings and then processes the resultant set using a single Transformer block. Just as in a standard MAE, our encoder only performs on a small subset (e.g., 25\%) of the whole set. Masked patches are removed, and no mask tokens are used for our encoder. The whole set is handled by a lightweight one-block decoder. The pre-trained model only has one Transformer block, which is used to predict masked patches for reconstructing original images.

\subsection{Downstream Model}

Without a doubt, the pre-trained model has a certain ability to filter interference that masks features because of the loss of explicit information. Therefore, when the pre-trained model is transferred to the downstream ViT classifier, its performance is significantly improved compared to the un-pretrained ViT, which can only obtain classification knowledge through label training and cannot mine implicit features. Moreover, the input and output of each block from the Transformer are in the same dimension, and each block in the multi-block ViT is learning how to extract feature representations. They have the same or similar learning capabilities, although there are some differences in extracting features. Due to the similarity of learning purposes for each block, FedMAE tries to cascade multiple one-block pre-trained MAEs, which will achieve higher-order implicit feature representations based on multi-block mapping to help boost and speed up the downstream task training.

\subsection{Training Process}
In Figure \ref{fig2}, \textcircled{1} shows the pre-training model of FedMAE on clients, which only contains a one-block encoder and a one-block decoder. We use unmasked patches (25\% of the whole image) for encoding, which means the model is tiny that is easy to deploy in many lightweight clients and train large images.
\textcircled{2} shows the communication between clients and the server. Due to the special one-block structure of the pre-training model and the very small learning rate of Transformer, the average operation of multiple model weights is not necessary (or effective) to boost the accuracy. Therefore, FedMAE realises pure asynchronous training, i.e., multiple pre-training models can train on different clients asynchronously without relying on each other. \textcircled{3} shows cascading multiple pre-trained one-block MAEs to form a multi-block model to help train the downstream task, e.g., both the encoder and decoder are needed to perform the image reconstruction task, and only the encoder is used to produce image representations for recognition.

\section{Analysis}

Based on the FedMAE as a system prototype, we analyze the data modeling and system mapping of federated learning with autoencoding. We analyze the relationship between the absence of explicit information in the data and feature perturbations and establish the equivalence between spatial variation in the data and feature denoising. An analytic model for federated autoencoders is given based on approximate linear mapping with a corresponding analytic solution as the linear approximation of optimal reconstruction.
This analytic solution depends on the data only, revealing the data impact of the optimal reconstruction independent of the backbone in federated learning.

\subsection{Problem Formulation}

\subsubsection{Data Corruption and Feature interference}
Modeling the pre-training dataset as a source dataset $D_S=\{x_{S1},x_{S2},\ldots,x_{Sn}\}$ and downstream dataset as a target dataset $D_T=\{x_{T1},x_{T2},\ldots,x_{Tn}\}$ with lables $Y_T=\{y_{T1},y_{T2},\ldots,y_{Tn}\}$, we design an encoder $h(x)=z$ and a decoder $g(z)=x$.
The reconstruction by the pair of $h(x),g(z)$ can be represented as $\hat{x}=g(h(x))$.
We use a loss function $l(x, g(h(x)))$ to measure the error between the original and reconstructed data, while the parameters of the autoencoder are learned through $D_S$ to minimize the reconstruction error.

Since MAE uses patch sampling, the $x$ input loses much explicit information, degraded as $\tilde{x}$, which leads to a corrupted input-to-feature explicit feature knowledge representation of $g(\cdot),h(\cdot)$. At this point, the training task becomes into reconstructing the original input $x$ from its corrupted degradation $\tilde{x}$ by minimizing $l(x, g(h(\tilde{x}))$. While on the features, the loss of explicit information corresponds to the loss of a portion of the features, leading to the corrupted degradation of the features as $\tilde{z}$. Since we use uniform dimensional encoding in the features, $\tilde{z}$ and $z$ are still uniform dimensions, so the above corruption in the features can be modeled as an interference as $\Delta z = z - \tilde{z}$.

It can be seen that the loss of explicit information in data $x$ is equivalent to the noise interference of feature $z$. We analyze this feature interference due to data loss and manage to design a mechanism to reduce this interference by hidden feature mining and train the system to learn for discovering hidden features.

\subsubsection{Federated Learning with Variation Based Augmentation}
We adopt FL in the pre-training system. There are $K$ clients and 1 server, where each client has $n$ samples as a dataset $\textbf{X}_k=\{x_{ki}\}_n$, where $k$ refers to the index of the $k$th client. If we aggregate all the samples and format them into a package, the corresponding aggregation can be formulated as $X_k=[x_{k1},\ldots,x_{kn}]$.
With MAE, the system training uses random corruption, i.e., the location of the deducted patches is different each time.
Therefore, the original data on $k$th client then has multiple variant versions of $\tilde{\textbf{X}}_{k,j} =\{x_{ki,j}\}_{n\times m}$ with the corresponding aggregation as $\tilde{X}_{k,j}=[\tilde{x}_{k1,j},\ldots,\tilde{x}_{kn,j}]$,
where $\tilde{x}_{ki,j}$ refers to the variation result of the original input $x_{ki}$ with the $j$th masked corruption.

When $m$ times of masked corruption is applied, then $\tilde{X}_{k,j}, k\leq K, j\leq m$ can provide $K\times n \times m $ samples of variants. It is equivalent to subjecting $K\times n$ original samples to $m$ corruption variants.
It can be seen that the original data are augmented with features interfering with corruption processing to form diverse variations. This strategy can improve the sample diversity as an augmentation on the one hand and reduce the variance on the other hand so that the system can learn the impact of feature interference on the data caused by multiple explicit information loss.

\subsection{Federated Learning with Autoencoder}
For a single masked autoencoder, we model the corrupted inputs as $\tilde{D_S}=\{\tilde{x}_{S1},\tilde{x}_{S2},\ldots,\tilde{x}_{Sn}\}$, where each $\tilde{x}_i$ is corrupted by randomly masking patches of data elements in $x_i$ and setting the masked region to 0 with probability $p\geq 0$.

Let $\hat{x}$ be the reconstruction of $x$ from the corrupted $\tilde{x}$.
\begin{equation}
\hat{x} = g(h(\tilde{x}))
\end{equation}
We set the loss function to measure the residual between $x$ and $\hat{x}$ under corruption as
\begin{equation}
l(x,\hat{x}) = l(x,g(h(\tilde{x}))
\end{equation}
Then the goal of the training task of the autoencoder pair is to find the optimal solution of $W$ to reconstruct the corrupted inputs $\tilde{x}$ that minimizes the squared reconstruction loss $l$.

In order to obtain an optimal autoencoder, the main concern in our design is the relationship between the stable output of the system reconstruction and the loss function, so it is of interest to study the $h(x)$ and $g(z)$ function trajectories. When an image with all zero pixels enters the system, the output of the encoder is a zero tensor, i.e., $h(0)=0$. Consequently, the output of the decoder is also a reconstruction image with all zero pixels, i.e., $g(0)=0$. Therefore, there exists a zero reference point for discussing the encoder and decoder operators. Then, we have the following proposition.

\begin{proposition}
\label{LemmaViTlinearMapping}
There exists a linear equivalent mapping with $W_h$ to the approximate encoder $h(\cdot)$.
\end{proposition}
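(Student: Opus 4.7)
The plan is to exhibit $W_h$ as the first-order linearization of $h$ about the zero reference point established just above the statement, and then argue this linear surrogate is equivalent to $h$ to leading order on the relevant input support. The argument hinges on (i) the regularity of the Transformer block and (ii) the anchoring identity $h(0)=0$.

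First, I would invoke the smoothness of a one-block ViT encoder: a patch embedding, additive positional encoding, multi-head self-attention (softmax), layer normalization, and a GeLU MLP are each differentiable almost everywhere, so $h$ admits a first-order Taylor expansion around any interior point. Writing
\begin{equation}
h(x) \;=\; h(0) \;+\; J_h(0)\,x \;+\; R(x), \qquad R(x)=o(\|x\|),
\end{equation}
and using $h(0)=0$, I would set $W_h := J_h(0)$, giving the equivalence $h(x)=W_h x + R(x)$.

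Second, I would upgrade this from a local to an "approximate equivalent" mapping in the sense the subsequent analysis needs. Two complementary justifications are available. The Taylor route argues that masked inputs $\tilde x$ have a substantial fraction of their patch coordinates clamped to $0$, so $\tilde x$ concentrates near the origin in patch space and the first-order term dominates the residual $R$. The projection route drops the local requirement altogether and defines $W_h$ as the best linear predictor $\arg\min_W \mathbb{E}_{x\sim D_S}\|h(x)-Wx\|^2$; this minimizer exists by orthogonal projection onto the finite-dimensional space of linear maps and yields a $W_h$ that coincides with $h$ in the $L^2(D_S)$ sense, which is exactly the norm in which the forthcoming reconstruction loss is measured.

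The main obstacle is step two: bridging the gap between a purely local derivative-based linearization and a globally meaningful "equivalent mapping" on realistic image inputs, where nonlinearities (especially softmax attention) are not negligible. I would present both justifications and note that the subsequent derivation of the analytical optimal reconstruction only uses $W_h$ inside expectations and quadratic losses, so the $L^2$-projection interpretation of $W_h$ is the safer one to carry forward; the Taylor viewpoint is retained to make the identification $W_h = J_h(0)$ concrete and to explain why the linear approximation is sharp near the masked inputs that dominate pre-training.
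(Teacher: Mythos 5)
Your first step is exactly the paper's proof: a Taylor expansion of $h$ at the zero reference point, using $h(0)=0$ to eliminate the constant term, identifying $W_h$ with the Jacobian $\nabla_x h(0)$, and discarding the higher-order residual $\epsilon$. The paper stops there (conceding only in a later remark that omitting the residual makes the linear surrogate ``relatively inferior''), so your second, $L^2$-projection route is an addition beyond the paper rather than a restatement of it --- and it is arguably the sounder justification, since it gives $W_h$ a global meaning in precisely the quadratic-loss sense used by the subsequent theorems, where the purely local Taylor argument does not.
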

\begin{proof}
Expanding the nonlinear vector function $h(x)$ into a Taylor series at $0$, it yields
\begin{equation}
    h(x) = h(0) + \nabla_{x} h(0) (x) + \epsilon
\end{equation}
where $\nabla_{x}h(0)$ denotes the gradient of operator $h(\cdot)$ at $x_0$ in the direction of the vector $x$, and $\epsilon$ a higher order infinitesimal residual. When ignoring the residual and let $\nabla_{x} h(0)x =  W_h x$, it yields
\begin{equation}
    h(x)\approx W_h x +h(0)
\end{equation}
As $h(0)=0$, $h(x)$ can be represented by the mapping $W_h$, and the proof is obtained.
\end{proof}

\begin{corollary}
There exists a linear equivalent mapping with $W_g$ to the approximate encoder $g(\cdot)$.
\end{corollary}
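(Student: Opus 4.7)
The plan is to mirror the argument used for Proposition~\ref{LemmaViTlinearMapping} almost verbatim, since the decoder $g(\cdot)$ plays a structurally symmetric role to the encoder $h(\cdot)$: both are nonlinear vector operators sharing the same zero reference point established in the paragraph preceding the proposition (namely $h(0)=0$ and $g(0)=0$, justified by noting that an all-zero-pixel image propagates to an all-zero latent and then to an all-zero reconstruction).

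First I would apply a first-order Taylor expansion of $g(z)$ about the zero latent, writing $g(z) = g(0) + \nabla_{z} g(0)(z) + \epsilon'$, where $\epsilon'$ collects the higher-order terms. Next, invoking the zero-reference property $g(0)=0$ directly kills the constant term, reducing the expansion to $g(z) \approx \nabla_{z} g(0)(z)$ modulo higher-order infinitesimals. Then I would identify the directional gradient operator with a matrix by setting $\nabla_{z} g(0)\,z = W_g z$, yielding the desired linear equivalent mapping $g(z) \approx W_g z$.

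The only step that requires any genuine care is the suppression of the residual $\epsilon'$; this is handled exactly as in Proposition~\ref{LemmaViTlinearMapping}, by declaring the mapping to be an \emph{approximate} (linear) equivalent and deferring nonlinearity to the omitted higher-order terms. I do not anticipate a main obstacle, since the corollary is essentially an immediate dual of the proposition: once the zero-reference argument is granted for both $h$ and $g$ in the surrounding discussion, the linearization of $g$ follows by the same Taylor-series manipulation with no additional hypotheses. The proof can therefore be stated in two or three lines and closed by remarking that this symmetric pair $(W_h, W_g)$ is what enables the subsequent analytic treatment of the cascaded reconstruction $g(h(\tilde{x}))$ as an approximate linear system.
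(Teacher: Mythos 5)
Your proposal is correct and matches the paper's intent exactly: the paper leaves the corollary unproved precisely because it follows by applying the Taylor-expansion argument of Proposition~\ref{LemmaViTlinearMapping} to $g(\cdot)$ with the zero reference $g(0)=0$, which is just what you do. No gaps; this is the same approach.
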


\begin{remark}
A practical representation of the linear mapping $W_h$ above is essentially an autoencoder implemented using a linear neural network. In practice, this linear representation may be relatively inferior because of the omission of higher-order feature representations and residual terms. 
In this paper, we mainly focus on the nature of the mapping and therefore use this mapping to simplify the problem, with the main aim of discussing the relationship between the sample, the loss function, and the optimal reconstruction in federated learning.
\end{remark}

For federated learning based on client $k$, this approximate linear equivalence mapping can help to simplify the model, then approximated encoder $h(\cdot)$ can be formulated as a linear mapping $W_{h_k}$. Similarly, the decoder $g(\cdot)$ can be equated to another linear mapping $W_{g_k}$, which yields

\begin{equation}
\hat{x} = g_k(h_k(\tilde{x})) \approx W_{g_k} W_{h_k} \tilde{x} = W_k \tilde{x}
\end{equation}
then, the loss function can be formulated as
\begin{equation}
\label{clientLossEq}
l(x,\hat{x}) = l(x,g_k(h_k(\tilde{x})) = \frac{1}{2n} \sum_i^n \| x_i - W_k \tilde{x_i} \|^2
\end{equation}
Then we have the theorem below.

\begin{theorem}
The approximate optimal solution of $W_k$ on client $k$ can 
be obtained by $W_k^*$, in which
\begin{equation}
    W_k^* = \bar{X}_k\tilde{X}_k^T (\tilde{X}_k\tilde{X}_k^T)^{-1}=W^*_{gk} W^*_{hk}
\end{equation}
where
\begin{equation}
\begin{array}{rl}
\tilde{X}_k & = [\tilde{X}_{k,1},\ldots,\tilde{X}_{k,j},\ldots,\tilde{X}_{k,m}]_{n\times m}\\
\bar{X}_k & = [{X}_{k},\ldots,{X}_{k}]_{n \times m}
\end{array}
\end{equation}
\end{theorem}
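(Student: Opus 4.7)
The plan is to recognise the per-client objective as an ordinary (batched) least-squares problem once the $m$ corruption variants per sample are stacked horizontally. Starting from Equation~(\ref{clientLossEq}) and averaging also over the $m$ masked variants introduced by the variation-based augmentation scheme of Section 4.1.2, I would write the total client loss as
\begin{equation}
L(W_k) \;=\; \frac{1}{2nm} \sum_{i=1}^{n} \sum_{j=1}^{m} \bigl\| x_{ki} - W_k \tilde{x}_{ki,j} \bigr\|^2 \;=\; \frac{1}{2nm} \bigl\| \bar{X}_k - W_k \tilde{X}_k \bigr\|_F^2,
\end{equation}
using exactly the block matrices $\bar{X}_k=[X_k,\ldots,X_k]$ and $\tilde{X}_k=[\tilde{X}_{k,1},\ldots,\tilde{X}_{k,m}]$ defined in the theorem statement. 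This first step is essentially bookkeeping and is what motivates the particular aggregation chosen in the theorem.

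Next I would derive the first-order optimality condition. Differentiating the Frobenius-norm residual with respect to $W_k$ and setting the gradient to zero yields the matrix normal equation $W_k\, \tilde{X}_k \tilde{X}_k^T = \bar{X}_k \tilde{X}_k^T$. Provided $\tilde{X}_k \tilde{X}_k^T$ is invertible, this gives the claimed closed form $W_k^* = \bar{X}_k \tilde{X}_k^T (\tilde{X}_k \tilde{X}_k^T)^{-1}$. Since the Frobenius squared residual is convex (indeed quadratic) in $W_k$, this critical point is automatically a global minimiser, completing the first half of the claim.

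For the factorisation $W_k^* = W_{g_k}^* W_{h_k}^*$ I would invoke Proposition~\ref{LemmaViTlinearMapping} together with its corollary, which provide the linear equivalent mappings $W_{h_k}$ and $W_{g_k}$ of the encoder and decoder respectively. Since $W_k$ was defined as the composition $W_{g_k} W_{h_k}$, any decomposition of the optimal product into two factors of matching inner (latent) dimension furnishes an admissible pair $(W_{g_k}^*, W_{h_k}^*)$, so the factorisation is obtained by construction, albeit non-uniquely.

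The hard part will be the invertibility hypothesis on $\tilde{X}_k \tilde{X}_k^T$: under the heavy masking used throughout the paper (typically $75\%$) the columns of $\tilde{X}_k$ contain many zeroed-out coordinates, and whenever the ambient pixel dimension exceeds the effective column rank $nm$ the Gram matrix is singular. A clean treatment would either impose a full-rank assumption on the augmented sample, or replace the inverse by the Moore--Penrose pseudoinverse and state the result as the minimum-norm least-squares solution $W_k^{\dagger} = \bar{X}_k \tilde{X}_k^{+}$. I would flag this as the principal technical caveat, alongside the non-identifiability of the encoder/decoder split, which is determined only up to an invertible similarity on the latent space.
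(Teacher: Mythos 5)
Your proposal follows essentially the same route as the paper: stack the $m$ corruption variants into $\tilde{X}_k$ and $\bar{X}_k$, write the aggregated loss as a Frobenius/trace quadratic in $W_k$, differentiate, and solve the normal equation $W_k\tilde{X}_k\tilde{X}_k^T=\bar{X}_k\tilde{X}_k^T$ to get $W_k^*=\bar{X}_k\tilde{X}_k^T(\tilde{X}_k\tilde{X}_k^T)^{-1}$. Your added caveats---that $\tilde{X}_k\tilde{X}_k^T$ may be singular under heavy masking (suggesting the Moore--Penrose pseudoinverse) and that the factorisation $W_k^*=W_{g_k}^*W_{h_k}^*$ is only determined up to an invertible transformation of the latent space---are points the paper's proof silently passes over, so your treatment is if anything slightly more careful.
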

\begin{proof}
With $m$ augmented corruptions, the aggregated input of client $k$ can be formulated as
\begin{equation}
\tilde{X}_k = [\tilde{X}_{k,1},\ldots,\tilde{X}_{k,j},\ldots,\tilde{X}_{k,m}]_{n\times m}    
\end{equation}
which is corresponding to the aggregated original data as
% \begin{equation}
$\bar{X}_k = [{X}_{k},\ldots,{X}_{k}]_{n \times m}$.
% \end{equation}
Let $\hat{X}=W_k \tilde{X}$ be the aggregated reconstruction of $X$.
Then the transformed loss function of the training with the aggregation can be formulated as
\begin{equation}
\label{clientAggregationLossEq}
l(\bar{X}_k,\hat{X}_k) =\frac{1}{2}\|\bar{X}_k-W_k\tilde{X}_k\|^2
=\frac{1}{2} tr
\left[ (\bar{X}_k-W_k\tilde{X}_k)^T(\bar{X}_k-W_k\tilde{X}_k)\right]
\end{equation}
This loss function is a convex function that can reach a minimum value when its derivative is 0. Therefore, with $\nabla_{W_k} l(\bar{X},\hat{X})=0$, it yields
\begin{equation}
\label{lfmin_k}
\begin{array}{rl}
2\nabla_{W_k} l(\bar{X}_k,\hat{X}_k) 
& = \nabla_{W_k} tr\left[ (\bar{X}_k-W_k\tilde{X}_k)^T(\bar{X}_k-W_k\tilde{X}_k)\right]\\
& = \nabla_{W_k} tr(\bar{X}^T_k \bar{X}_k - \tilde{X}^T_k W_k^T \bar{X}_k - \bar{X}^T_k W_k \tilde{X}_k + \tilde{X}^T_k W_k^T W_k \tilde{X}_k)\\
& = \nabla_{W_k} tr(W_k\tilde{X}_k\tilde{X}_k^T W^T_k) - 2 \nabla_{W_k}  tr(W_k \bar{X}_k \tilde{X}^T_k )\\
& = 2 W_k\tilde{X}_k\tilde{X}^T_k - 2 \bar{X}_k \tilde{X}^T_k=0\\
\end{array}
\end{equation}
Solving the \eqref{lfmin_k} yields $W^*_k = \bar{X}_k\tilde{X}_k^T (\tilde{X}_k\tilde{X}_k^T)^{-1}$ as the completion of the proof.
\end{proof}
\begin{theorem}
The overall approximate optimal solution of $W$ on $K$ clients can 
be obtained by $W_A^*$, in which
\begin{equation}
    W_A^* = \bar{X}\tilde{X}^T (\tilde{X}\tilde{X}^T)^{-1}
\end{equation}
where
\begin{equation}
\begin{array}{rl}
\tilde{X} & =[\tilde{X}_{1,1}, \ldots, \tilde{X}_{k,j},\ldots, \tilde{X}_{K,m}]_{K\times n \times m}\\
\bar{X} & =[X, \ldots, X]_{K\times n \times m}
\end{array}
\end{equation}
\end{theorem}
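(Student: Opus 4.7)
The plan is to lift the single-client argument from the preceding theorem to the federated setting by reinterpreting the joint objective across all $K$ clients as a single least-squares reconstruction problem on a concatenated data matrix. First I would write the global federated loss as the sum of per-client losses from~\eqref{clientAggregationLossEq},
$$l(\bar{X},\hat{X}) \;=\; \sum_{k=1}^K \tfrac{1}{2}\|\bar{X}_k - W\tilde{X}_k\|^2,$$
parametrized now by a single shared linear mapping $W$, which plays the role of the common linearized encoder-decoder composition under Proposition~\ref{LemmaViTlinearMapping} and its corollary.

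Next I would observe that the squared Frobenius norm decomposes block-wise: if we stack the per-client aggregated matrices horizontally into $\tilde{X}=[\tilde{X}_1,\ldots,\tilde{X}_K]$ and $\bar{X}=[\bar{X}_1,\ldots,\bar{X}_K]$ of total width $K\times n\times m$, then $\sum_k\|\bar{X}_k - W\tilde{X}_k\|^2 = \|\bar{X}-W\tilde{X}\|^2$. Consequently the global loss takes the identical algebraic form as the single-client loss in the previous theorem, but over these enlarged concatenated matrices. Equivalently, one can check directly that $\sum_k \tilde{X}_k\tilde{X}_k^T = \tilde{X}\tilde{X}^T$ and $\sum_k \bar{X}_k\tilde{X}_k^T = \bar{X}\tilde{X}^T$ from the block structure.

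Then I would repeat the convex-minimization step verbatim: expand the trace, use $\nabla_W \operatorname{tr}(W\tilde{X}\tilde{X}^T W^T) = 2W\tilde{X}\tilde{X}^T$ and $\nabla_W \operatorname{tr}(W\bar{X}\tilde{X}^T) = \bar{X}\tilde{X}^T$, set the gradient to zero, and solve to obtain
$$W_A^* \;=\; \bar{X}\tilde{X}^T(\tilde{X}\tilde{X}^T)^{-1}.$$
Convexity of the quadratic loss guarantees that this stationary point is the global minimizer, completing the proof.

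The main obstacle I anticipate is justifying invertibility of $\tilde{X}\tilde{X}^T$: with $K\times n\times m$ columns distributed across heterogeneous clients and many random mask patterns, one needs to argue that the pooled corrupted data has full row rank in the pixel dimension, which should follow generically from the diversity introduced by the $m$ augmentation variants per sample but deserves to be flagged. A secondary subtlety worth a sentence in the proof is the implicit modeling choice that a \emph{single} $W$ is shared across all clients rather than one $W_k$ per client; this should be justified as the centralized surrogate of the federated objective whose solution serves as the benchmark linear reconstruction for the whole system, rather than as an individual reconstruction per client.
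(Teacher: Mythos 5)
Your proposal is correct and follows essentially the same route as the paper, which simply concatenates all clients' corrupted and original aggregations into $\tilde{X}$ and $\bar{X}$ and states that the single-client derivation of \eqref{clientAggregationLossEq}--\eqref{lfmin_k} carries over verbatim. Your version is actually more explicit than the paper's one-line argument --- in particular the block identity $\sum_k\|\bar{X}_k-W\tilde{X}_k\|^2=\|\bar{X}-W\tilde{X}\|^2$ and the caveats about invertibility of $\tilde{X}\tilde{X}^T$ and the shared $W$ are details the paper leaves implicit --- but the underlying approach is identical.
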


\begin{proof}
The combination of data after aggregating all variation samples can be obtained by $\tilde{X}=[\tilde{X}_{1,1}, \ldots, \\ \tilde{X}_{k,j},\ldots, \tilde{X}_{K,m}]_{K\times n \times m}$,
corresponding to the same scale of the original sample data aggregation as a whole can be expressed as
$\bar{X}=[X, \ldots, X]_{K\times n \times m}$.
With the modeling and analytic deduction similar to \eqref{clientAggregationLossEq} and \eqref{lfmin_k}, the optimal solution can be obtained.
\end{proof}

\begin{proposition}
When random sampling is used to corrupt the data X, the performance of the established optimal solution is proportional to the number of clients.
\end{proposition}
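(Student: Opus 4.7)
The plan is to quantify how the optimal reconstructor $W_A^*$ from the previous theorem improves as $K$ grows and to translate that improvement directly into reconstruction loss. First, I would interpret ``performance'' as the expected reconstruction loss on a fresh masked sample, $L(W) = E\|x - W\tilde{x}\|^2$, and introduce the population-level optimum $W^{\infty} = C_{x\tilde{x}}\, C_{\tilde{x}\tilde{x}}^{-1}$, where $C_{x\tilde{x}} = E[x\tilde{x}^T]$ and $C_{\tilde{x}\tilde{x}} = E[\tilde{x}\tilde{x}^T]$ are the second moments induced by the random masking distribution. The normal-equation derivation in the previous theorem applies verbatim at the population level, so $W^{\infty}$ is the minimizer of $L$ and serves as the oracle against which $W_A^*$ is compared.

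Next, I would rewrite the theorem's closed form as $W_A^* = \widehat{C}_{x\tilde{x}}\,\widehat{C}_{\tilde{x}\tilde{x}}^{-1}$, where
\[
\widehat{C}_{\tilde{x}\tilde{x}} = \tfrac{1}{Knm}\,\tilde{X}\tilde{X}^T, \qquad \widehat{C}_{x\tilde{x}} = \tfrac{1}{Knm}\,\bar{X}\tilde{X}^T,
\]
are empirical averages of $Knm$ independent rank-one contributions, one per (client, sample, mask) triple, under the random-sampling hypothesis of the proposition. A law of large numbers, quantitatively a matrix Bernstein inequality, then gives $\|\widehat{C}_{\tilde{x}\tilde{x}} - C_{\tilde{x}\tilde{x}}\|$ and $\|\widehat{C}_{x\tilde{x}} - C_{x\tilde{x}}\|$ of order $O(1/\sqrt{Knm})$ with high probability; a standard perturbation bound for matrix inversion propagates this to $\|W_A^* - W^{\infty}\| = O(1/\sqrt{Knm})$.

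Then I would close the argument with a second-order Taylor expansion of $L$ about $W^{\infty}$. Because $W^{\infty}$ is the population minimizer, the first-order term vanishes and
\[
L(W_A^*) - L(W^{\infty}) = O\bigl(\|W_A^* - W^{\infty}\|^2\bigr) = O\bigl(1/(Knm)\bigr).
\]
Holding the per-client sample count $n$ and the number of random masks $m$ fixed, the excess reconstruction loss decays like $1/K$, which I read as the proposition's claim that performance is proportional to the number of clients (proportional improvement, i.e.\ inverse scaling in $K$).

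The main obstacle is the informal wording of ``proportional''. Strictly, it is the \emph{excess} loss over the oracle that scales like $1/K$, not the loss itself, so matching the statement literally requires either a regime in which $L(W^{\infty})$ is itself small, e.g.\ near-linear data under a small masking probability $p$ so that the linear approximation of Proposition~\ref{LemmaViTlinearMapping} is nearly tight, or a reinterpretation of ``performance'' as the convergence rate of $W_A^*$ toward the oracle reconstructor. A distribution-free fallback would argue directly from the block structure of $\tilde{X}\tilde{X}^T$: appending a new client contributes an additional positive-semidefinite block to the Gram matrix, which can only improve the conditioning of the normal equations and therefore the quality of the solution, yielding a monotone but weaker form of the claim.
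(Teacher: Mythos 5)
The paper does not actually prove this proposition: it is followed only by a qualitative remark observing that fresh random masks at each step enrich $\tilde{X}$ and $\bar{X}$ with new variant versions, so the resulting normal-equation solution ``better covers the different inputs.'' Your statistical argument is therefore a genuinely different, and far more substantive, route: you recast $W_A^*$ as a plug-in estimator of a population reconstructor $W^{\infty}=C_{x\tilde{x}}C_{\tilde{x}\tilde{x}}^{-1}$, obtain concentration of the empirical second moments, and convert that into an excess-loss rate. This buys a precise, falsifiable version of ``performance improves with $K$'' that the paper's remark cannot deliver, and your closing paragraph correctly diagnoses the real weakness of the statement itself: ``proportional'' can only be honored as inverse scaling of the \emph{excess} loss, not of the loss.

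Two caveats on your argument as written. First, the $Knm$ rank-one terms in $\widehat{C}_{\tilde{x}\tilde{x}}$ are not independent: for fixed $(k,i)$ the $m$ masked copies $\tilde{x}_{ki,j}$ all share the same underlying $x_{ki}$, so concentration is governed by the $Kn$ independent data points (the $m$ masks only reduce the conditional variance within each block), giving $O(1/\sqrt{Kn})$ rather than $O(1/\sqrt{Knm})$; this does not harm the conclusion in $K$ since $n$ and $m$ are held fixed, but the rate should be stated accordingly. Second, the existence of a single population limit $W^{\infty}$ requires the clients' data to be i.i.d.\ draws from one common distribution, which is precisely the assumption the paper relaxes in its non-IID experiments; under heterogeneity the blocks concentrate around a mixture and the clean $1/K$ decay requires the mixture proportions to stabilize as $K$ grows. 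Your distribution-free fallback (each new client appends a positive-semidefinite block to $\tilde{X}\tilde{X}^T$) is the right hedge against both issues, but note that improved conditioning of the Gram matrix does not by itself imply a lower reconstruction loss, so that version establishes only a monotonicity heuristic --- essentially the same strength as the paper's own remark.
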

\begin{remark}
As chunking and random sampling are applied to the raw data, each client enters each training step with a single random sample of the raw data it carries, resulting in a different variant of the data. If the sampling is different each time, then the original sample can thus be augmented to produce genes to cover different variant versions of the data. The corresponding $\tilde{X}$ is used to solve the optimal reconstruction mapping and the corresponding $\bar{X}$ are then enriched with elements, resulting in mapping solutions that better cover the different inputs.
\end{remark}
\begin{proposition}
The number of variant $m$ has its analytic upper bound as
\begin{equation}
\sup m = n\times \binom{B}{b} = n \frac{B!}{b!(B-b)!}    
\end{equation}
%\binom(n,x)
%C(_B^b)
where $B$ refers to the number of all patches in $\{x_i\}_n$ after chunking segmentation,  $b$ the number of selected patches for encoding, and $\binom{B}{b}$ is a combination with a selection of patches from $B$ distinct patches.
\end{proposition}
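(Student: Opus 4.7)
The plan is to treat the upper bound as a straightforward combinatorial counting exercise, since the variants $\tilde{x}_{ki,j}$ are constructed by a deterministic recipe: take the $i$th raw sample $x_{ki}$, partition it into $B$ patches, and retain exactly $b$ of them (the remaining $B-b$ being zeroed out according to the masking rule in Section~4.1). Hence a variant is fully specified by the pair (index of the underlying sample, choice of the retained $b$-subset of patches).

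First I would fix a single sample $x_{ki}$ and count how many genuinely distinct variants it can produce. Since a variant is determined by the subset $S\subseteq\{1,\ldots,B\}$ of retained patches with $|S|=b$, and distinct subsets yield distinct corruption patterns (different patches are zeroed), the number of variants per sample is exactly $\binom{B}{b}$. Next I would sum over the $n$ samples in the (per-client) dataset $\{x_i\}_n$; because the variants of different raw samples are themselves distinct (they are corruptions of different base images, hence cannot coincide as long as the raw samples differ), the total count is bounded by $n\binom{B}{b}$.

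I would then argue that this bound is tight in the supremum sense: if the random masking procedure is allowed to exhaust every $b$-subset for every sample, then precisely $n\binom{B}{b}$ distinct variants are produced, so $\sup m = n\binom{B}{b}$. Finally I would simplify $\binom{B}{b}$ to $\tfrac{B!}{b!(B-b)!}$ to match the stated form, and note the consistency with the earlier modelling: when $m$ in $\tilde{X}_{k,j}$ ranges up to this supremum, the aggregated matrix $\tilde{X}_k$ enumerates every possible masked version of each raw sample, which connects this proposition to the previous remark about richer $\bar{X},\tilde{X}$ yielding better optimal reconstruction mappings.

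The main obstacle is not technical but conceptual: one must justify that no two distinct $b$-subsets yield the same corrupted image (otherwise the count would overestimate), which is immediate as long as the raw patches are not all identical in pixel content. A secondary subtlety is whether duplicate raw samples in $\{x_i\}_n$ could cause double-counting across the factor $n$; I would sidestep this by interpreting the bound as counting (sample-index, mask) pairs rather than distinct image tensors, which is the natural reading consistent with the construction of $\tilde{X}_k$ in the earlier theorems.
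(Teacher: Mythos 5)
The paper states this proposition without any accompanying proof (only a remark follows it), so there is no authorial argument to compare against; your counting argument --- a variant is determined by which $b$ of the $B$ patches survive masking, giving $\binom{B}{b}$ variants per sample and $n\binom{B}{b}$ over the dataset, with the supremum attained when every $b$-subset is exhausted --- is the natural justification the proposition implicitly relies on, and it is correct. You are also right to flag the factor of $n$ as the one genuine tension: under the paper's earlier definition $m$ indexes corruption rounds applied to each sample (so the total variant count is $n\times m$, which would give $\sup m=\binom{B}{b}$ with no factor of $n$), and reading the bound as a count of (sample, mask) pairs, as you do, is the only interpretation under which the stated formula $n\binom{B}{b}$ is literally the supremum; making that reinterpretation explicit, together with the caveat about distinct patch contents ensuring distinct corrupted images, is exactly the care the paper omits.
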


\begin{remark}
Although the number of versions of the variation $m$ is limited, disordering all the versions at each step of the training can still enhance the performance of the system learning as dissimilar patches can be rearranged in different orders to form different versions of permutation to augment the data for training in each epoch. 
\end{remark}

\section{Experiments}
In this section, we demonstrate our experimental results. Section \ref{section 5.1} introduces our experimental setup, including datasets, set up, and the state-of-the-art baselines. Section \ref{section 5.2} compares accuracies, model parameters and GFLOPs of different FSSL methods with our FedMAE. Moreover, we demonstrate the impact of the number of total clients and the statistical heterogeneity for both FedMAE and other FSSL methods in Section \ref{section 5.3} and \ref{section 5.4}. 
In Section \ref{section 5.5}, we show the scalability of the amount of data on the server for FedMAE. 
Additionally, we present various experimental results in \ref{section 5.6}, \ref{section 5.7} and \ref{section 5.8} on many parameters (including the number of local epochs, the participation ratio and the number of training rounds) and illustrate how they influence the training results of FedMAE. 

\subsection{Experimental Setup} 
\label{section 5.1}
\noindent\textbf{Datasets}
We conduct our experiments using several datasets. We use ImageNet \cite{deng2009imagenet} with 1000 classes and Mini-ImageNet with 100 classes, and both of their input size is 224x224. The total number of mini-ImageNet is 60000, which are chosen from ImageNet by the method of \cite{vinyals2016matching}, while the ImageNet has 1,281,167 training images and 50000 test images.
We also use other public datasets, including CIFAR10, CIFAR100 \cite{krizhevsky2009learning} and Mini-INAT2021. 
There are 60,000 32x32 color images in CIFAR10 and CIFAR100, where the training set has 50,000 images, and the test set has 10,000 images. CIFAR10 is used for the task of 10-class image classification, while CIFAR100 is utilized for the 100-class image classification task.
On the other hand, Mini-INAT2021 \cite{inat2021} is also a large-scale dataset, which is the more accessible version of INaturalist Dataset 2021. Compared to the original dataset which contains nearly 2.7M training images for 10000 natural species, the number of images in Mini-INAT2021 has been cut down to 50 training samples and 10 validation samples per specie for a total of 500K training images and 100K validation images. The maximum dimension of these images is 800 pixels. 

\noindent\textbf{Setup}
 Our experiments consist of two phases: federated pre-training and downstream fine-tuning. In the pre-training phase, we use the Mini-ImageNet dataset if our results need to be compared with the results of other baselines. Otherwise, we use the ImageNet dataset to evaluate the effect of various parameters on the performance of our method. 
The pre-training dataset is divided into K clients. When the data held by each client is assumed to be IID distributed among each other, the division of the dataset requires that each client holds images of all categories. When the data division is assumed to follow non-IID distribution, we divide the dataset by taking a sample of the Dirichet distribution's class priors \cite{hsu2019measuring}. More heterogeneous division can be made by specifying smaller Dirichet parameter $\alpha$ during sampling. 
Unless otherwise stated, the federated learning conducted in the pre-training phase adopts the following settings. The number of training rounds is set to 200, the number of local epochs is set to 10, the total number of local clients is set to 100, and the participation ratio is set to 0.05. In the downstream fine-tuning phase, we replace the decoder part of the pre-trained backbone with the head layer specified by the downstream task and retrain the weights and biases of all layers. For all downstream tasks, the model is trained for 100 epochs at a batch size of 64. In addition, for the supervised evaluation, we fine-tuned the new model using 100\% of the labeled data. For the semi-supervised evaluation, we use 10\% of the labeled data for fine-tuning. Since the CIFAR10 and CIFAR100 datasets are relatively simple, we only test supervised evaluation.

\noindent\textbf{Baselines}
To fairly evaluate the proposed FedMAE framework, we compare it with the following state-of-the-art FSSL benchmarks. 
\textbf{1) Fed-SimSiam}: a naive combination of SimSaim and FL.
\textbf{2) Fed-SimCLR}:  a naive combination of SimCLR and FL.
\textbf{3) FedU \cite{zhuang2021collaborative}}: Using the divergence-aware predictor updating with self-supervised BYOL \cite{grill2020bootstrap}. \textbf{4) FedEMA \cite{zhuang2021divergence}}:  Using EMA of a global model to adaptively update online client networks. \textbf{5) Orchestra \cite{DBLP:conf/icml/LubanaTKDM22}}:  a novel clustering-based FSSL technique. 
For downstream tasks' evaluation, we fine-tune the entire model using downstream labeled data (10\% or 100\% of the total dataset).

%\noindent\textbf{Supplement} Some of the experimental results and the specific experiment settings are moved to the appendix. A.1 shows the effect of changing levels of heterogeneity on the results of FedMAE and other baseline methods. From A.2 to A.6, we conduct more ablation studies on various parameters (including the number of clients, the number of local epochs, the participation ratio, the number of training rounds and the depth of the pre-trained backbone) and demonstrate how they influence the training results of FedMAE. In A.7, we test the designs in FedMAE that can speed up the computation to figure out how much speedup is caused by each design. In A.8, we conduct an additional experiment to examine if our propositions are correct. Finally, we place the details about our experiment settings in A.9.

\begin{table}[h!]
\caption{Accuracy (\%) in IID ($\alpha=0$) and Non-IID ($\alpha=1e-1$) of FL settings on CIFAR10, CIFAR100, Mini-INAT and ImageNet datasets. We use Mini-ImageNet for pre-training, and evaluate models using the popular semi-supervised fine-tuning with 10\% labeled data and fully-supervised fine-tuning. For pre-training period, the local clients are 100, the training rounds of pre-training are 200,  the local training epochs are 10 in each training round. For the downstream task, the total downstream training epochs are 100. }
\label{tab1}
\centering
\small
\renewcommand\tabcolsep{4.5pt}
\begin{tabular}{lcccccccccccc}
\hline
\multicolumn{1}{c}{} &
  \multicolumn{2}{c}{CIFAR10} &
  \multicolumn{2}{c}{CIFAR100} &
  \multicolumn{4}{c}{Mini-INAT} &
  \multicolumn{4}{c}{ImageNet} \\ \cline{2-13} 
\multicolumn{1}{c}{} &
  IID &
  non-IID &
  IID &
  non-IID &
  \multicolumn{2}{c}{IID} &
  \multicolumn{2}{c}{non-IID} &
  \multicolumn{2}{c}{IID} &
  \multicolumn{2}{c}{non-IID} \\ \cline{2-13}
\multicolumn{1}{c}{\multirow{-3}{*}{Method}} &
  100\% &
  100\% &
  100\% &
  100\% &
  10\% &
  100\% &
  10\% &
  100\% &
  10\% &
  100\% &
  10\% &
  100\% \\ \hline
Fed-SimSiam(\%) &
  {89.91} &
  {89.58} &
  {68.52} &
  {71.46} &
  {0.13} &
  {32.57} &
  {0.10} &
  {37.43} &
  {29.84} &
  {65.26} &
  {27.81} &
  {64.87} \\
Fed-SimCLR(\%) &
  {89.54} &
  {90.39} &
  {67.20} &
  {71.24} &
  {0.15} &
  {37.70} &
  {0.13} &
  {37.60} &
  {31.84} &
  {65.47} &
  {29.50} &
  {65.32} \\
FedU(\%) &
  {77.43} &
  {72.02} &
  {40.40} &
  {38.44} &
  {0.04} &
  {37.88} &
  {0.05} &
  {37.61} &
  {33.69} &
  {65.34} &
  {31.15} &
  {65.34} \\
FedEMA(\%) &
  {70.73} &
  {71.00} &
  {40.78} &
  {41.13} &
  {0.17} &
  {38.40} &
  {016} &
  {37.43} &
  {28.65} &
  {65.24} &
  {27.89} &
  {65.35} \\
Orchestra(\%) &
  {88.87} &
  \textbf{90.66} &
  {72.11} &
  {72.27} &
  {0.12} &
  {38.74} &
  {0.15} &
  {39.23} &
  {32.24} &
  {65.02} &
  {33.15} &
  {66.50} \\ \hline
FedMAE(\%) &
  \textbf{90.92} &
  {90.57} &
  \textbf{73.33} &
  \textbf{74.11} &
  \textbf{1.53} &
  \textbf{46.01} &
  \textbf{1.54} &
  \textbf{41.27} &
  \textbf{37.93} &
  \textbf{77.60} &
  \textbf{38.72} &
  \textbf{77.79} \\ \hline
\end{tabular}
\end{table}

\begin{table}[h!]
\caption{Comparison of FedMAE's Model Parameters and GFLOPs with FSSL methods. We compute the model parameters and GFLOPs using images with the size of 224x224 as input.}
\label{tab2}
\centering
\small
\renewcommand\tabcolsep{3.0pt}
\begin{tabular}{lccccccc}
\hline
                    & Fed-SimSiam & Fed-SimCLR & FedU   & FedEMA & Orchestra &  \begin{tabular}[c]{@{}c@{}}FedMAE\\ (Pre-train)\end{tabular} &
  \begin{tabular}[c]{@{}c@{}}FedMAE\\(Downstream)\end{tabular} \\ \hline
Model Params & 12.03M      & 11.70M     & 38.47M & 38.47M & 11.84M    & \textbf{11.62M} & \textbf{39.97M} \\
GFLOPs              & 3.65        & 3.65       & 7.40   & 7.40   & 7.31      & \textbf{1.23}  & \textbf{7.39}   \\ \hline
\end{tabular}
\end{table}

\begin{figure}[h!]
  \centering
  \includegraphics[width=1.0\textwidth]{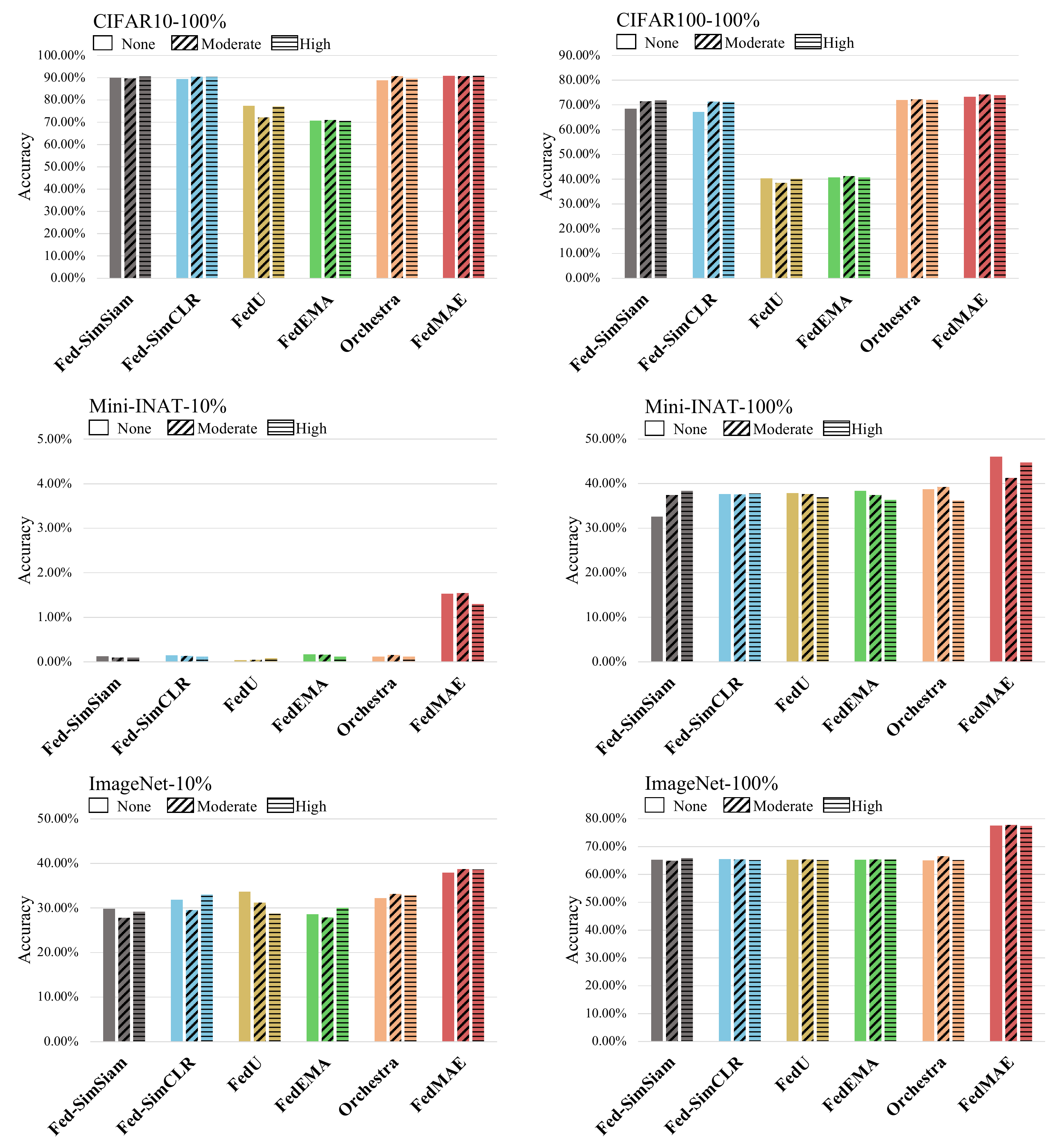}
  \caption{Sensitivity of Statistical Heterogeneity on CIFAR10, CIFAR100, Mini-INAT and ImageNet. The pre-training dataset is Mini-ImageNet. FedMAE does not lose performance with increased heterogeneity.}
  \label{fig3}
\end{figure}

\subsection{Comparison with FSSL Methods}
\label{section 5.2}
We first assess the accuracies, model parameters and GFLOPs of different FSSL methods using fine-tune protocol and transfer learning evaluation. 
In this experiment, we examine FedMAE's performance in IID and Non-IID scenarios since cross-device FL is our key purpose for developing FedMAE. The pre-training dataset used in FL is Mini-ImageNet, and the downstream tasks' datasets are CIFAR10, CIFAR100, Mini-INAT and ImageNet. 
Table \ref{tab1} shows our FedMAE can achieve state-of-the-art results on small image datasets such as CIFAR10 and CIFAR100. Meanwhile, FedMAE demonstrates exceptional performance on large image datasets, such as Mini-INAT and ImageNet, surpassing other FSSL techniques in both IID and non-IID scenarios in terms of accuracy. Specifically, in the IID environment with 100\% labeled data, FedMAE achieves an accuracy of 46.01\% on Mini-INAT, which is 7.27\% higher than the accuracy obtained by other methods. In the non-IID scenario with 100\% labeled data, FedMAE achieves an accuracy of 77.79\% on ImageNet, which is 11.29\% higher than the accuracy obtained by other methods.
Table \ref{tab2} compares the differences between FedMAE and other FSSL methods in terms of model parameters and GFLOPs. The pre-training model of FedMAE is not only the smallest in parameters but also the smallest in GFLOPs, which means faster transmission between local clients and the server, and fewer computing power requirements for local clients. Although the downstream model of FedMAE has a large number of parameters due to the cascade design, the computation resources required for its training and subsequent inference are acceptable among all models at present, which can be deployed for inference in local clients.

%We first assess the accuracies, model parameters and GFLOPs of different FSSL methods using fine-tune protocol and transfer learning evaluation. 
%In this experiment, we examine FedMAE's performance in IID and Non-IID scenarios, since cross-device FL was our key purpose for developing FedMAE. The pre-training dataset used in FL is Mini-ImageNet, and the downstream tasks' datasets are CIFAR10, CIFAR100 and ROAD SIGN. Results are shown in Table \ref{table1} and \ref{table2}.
%We make several observations. 
%(i) Our FedMAE outperforms its competitors in all metric results for three public datasets. 
%(ii) Since both cifar10 and cifar100 are resized to 224x224 size to fit our models, we chose simple but large images of RODA SIGN, which are real 224x224. FedMAE has achieved a considerable advantage obviously over existing FSSL methods from 30\% to 50\%. 
%(iii) The pre-training model of FedMAE is not only the smallest in terms of parameters but also the smallest in GFLOPs, which means faster transmission between local clients and the server, and fewer computing power requirements for local clients.
%(iv) Although the downstream model of FedMAE has a large number of parameters due to the cascade design, the computation resources required for its training and subsequent inference are acceptable among all models at present, which can to be deployed for inference in local clients.

%(iii) Despite FedMAE's proud results of model performance, the pre-training model still transmits the smallest amount of parameters of all methods in the scenario of federated learning.

\subsection{Statistical Heterogeneity.}
\label{section 5.3}
We use 100 clients and analyze three levels of heterogeneity by setting $\alpha$ to 0 (none/IID), 1e-1 (moderate), and 1e-3 (high). Results are shown in Figure \ref{fig3}. The data distribution in federated learning is dispersed and most current FSSL methods are based on contrastive learning, so SSL approaches tend to be sensitive to heterogeneity and therefore adopt certain strategies for this challenge. In contrast, FedMAE is robust and its performance generally does not degrade with increasing heterogeneity. In fact, the pre-trained model of FedMAE is an image reconstruction method that uses an autoencoder, which is not sensitive to image classification, FedMAE learns how to reconstruct the image itself rather than the class features of images in the period of pre-training.

\begin{figure}[h!]
  \centering
  \includegraphics[width=1.0\textwidth]{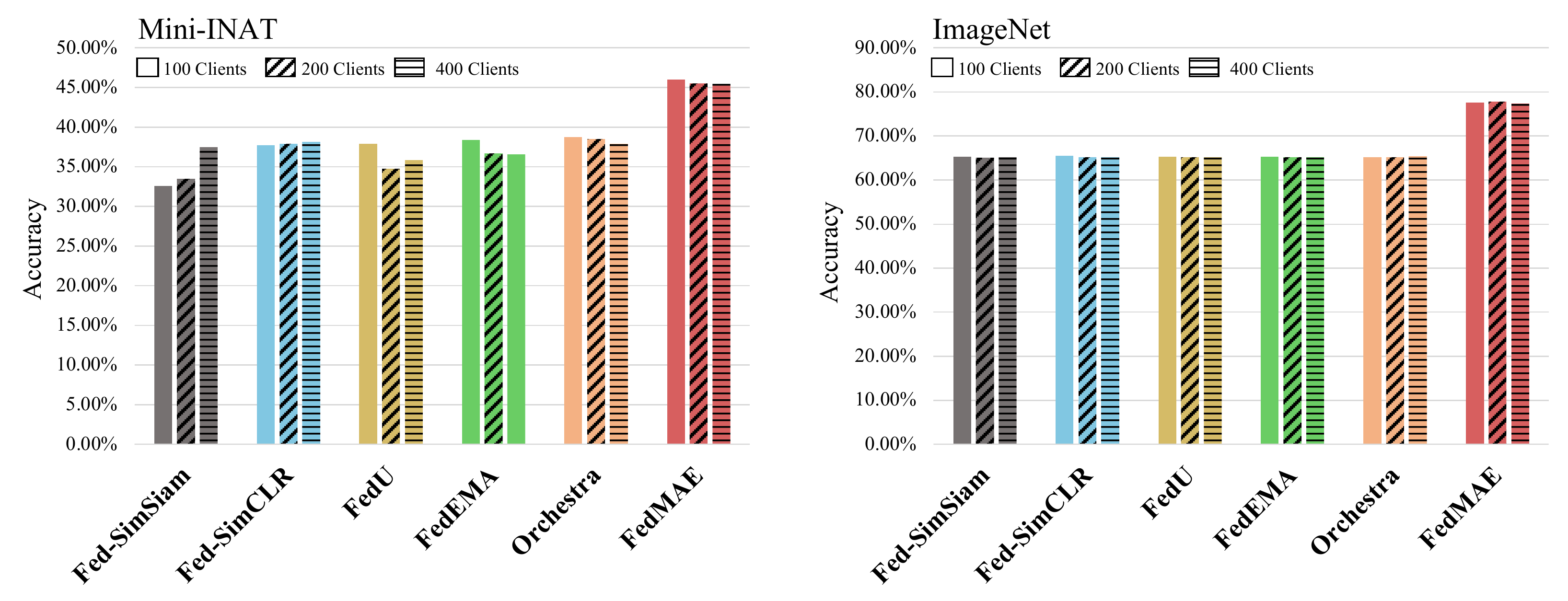}
  \caption{Scalability with the number of clients on Mini-INAT and ImageNet. The pre-training dataset is Mini-ImageNet. FedMAE outperforms other FSSL methods and is generally robust to a number of clients in the federation.}
  \label{fig4}
\end{figure}

\subsection{Number of Clients.}
\label{section 5.4}
Changing the number of clients also affects the performance of the pre-trained model in FL. We increased the number of local clients from 100 to 400 with IID data distribution, still selecting 5 clients with 10 epochs in each training round, and the total number of training rounds are 200. Results are provided in Figure \ref{fig4}. We find that, unlike other methods which significantly degrade performance as the number of clients increases, FedMAE achieves similar performance or a gentle decline, which indicates FedMAE can still learn a lot of useful knowledge on relatively dispersed and small amounts of data.

\begin{table}[h!]
\caption{Accuracy (\%) with the amount of data on the server. The pre-training dataset is Mini-ImageNet.}
\label{tab3}
\centering
\small
\renewcommand\tabcolsep{2.0pt}
\begin{tabular}{lcccccccccccccccc}
\hline
 &
  \multicolumn{4}{c}{CIFAR10} &
  \multicolumn{4}{c}{CIFAR100} &
  \multicolumn{4}{c}{Mini-INAT} &
  \multicolumn{4}{c}{ImageNet} \\ \hline
Data on Server &
   0 &
   1000 &
   5000 &
   10000 &
   0 &
   1000 &
   5000 &
   10000 &
   0 &
   1000 &
   5000 &
   10000 &
   0 &
   1000 &
   5000 &
   10000 \\
FedMAE (\%) &
    90.92 &
    91.70 &
    92.40 &
    \textbf{92.96} &
    73.93 &
    74.23 &
    75.16 &
    \textbf{76.17} &
    46.01 &
    46.33 &
    48.83 &
    \textbf{49.01} &
    77.60 &
    77.81 &
    77.84 &
    \textbf{77.90} \\ \hline
\end{tabular}
\end{table}

\subsection{Scalability with the amount of data on the server}
\label{section 5.5}
Since what one-block MAE can learn is very limited, we share a small amount of data on the server to help build a better downstream model. We tested the accuracies of downstream datasets when sharing 0, 1000, 5000 and 10000 unlabeled data on the server. In this scenario, we first use clients' data for federated training, after 200 training rounds, we cascade 5 one-block pre-trained MAE to build a five-block MAE. Afterward, sharing data on the server will be fed into the five-block MAE to train a better downstream model for downstream tasks. The training epochs here for sharing data are 100. Results are shown in Table \ref{tab3}. Although the amount of data on the server is very limited, FedMAE provides a good initialized model that can further build a better downstream model.

\begin{table}[h!]
\caption{Robustness to local epochs on CIFAR10 and CIFAR100. The pre-training dataset is ImageNet. We find that FedMAE achieves the best accuracy when local epochs are 10.}
\label{tab5}
\centering
\small
\renewcommand\tabcolsep{10.0pt}
\begin{tabular}{{lccccc}}
\hline
\begin{tabular}[c]{@{}l@{}}Number of Local Epochs\\ in each training round\end{tabular} & 1       & 5       & 10      & 15      & 20      \\ \hline
CIFAR10 (\%)              & 90.57 & 93.25 & 93.68 & 92.73 & 92.71 \\ 
CIFAR100 (\%)             & 73.41 & 76.75 & 77.07 & 75.74 & 75.78 \\ \hline
\end{tabular}
\end{table}

\subsection{Local Epochs}
\label{section 5.6}
The number of local epochs represents the number of times all data from each client is trained in each training round. According to Table \ref{tab5}, we find that increasing this federated learning attribute can lead to an improvement in fine-tuning accuracy and the best result is achieved by setting the number of local epochs to 10. However, local training may be limited to a small number of epochs because of the resource limitations on clients. If we need to take this into account, it might be more appropriate to set the number of local epochs to 5, as the improvement from increasing the number of local epochs from 5 to 10 is very small.

\begin{table}[h!]
\caption{Sensitivity to participation ratio on CIFAR10 and CIFAR100. The pre-training dataset is ImageNet. While the accuracies of clients decrease at smaller participation ratios normally, we find FedMAE is almost no degradation.}
\label{tab6}
\centering
\small
\renewcommand\tabcolsep{10.0pt}
\begin{tabular}{{lccccc}}
\hline
\begin{tabular}[c]{@{}l@{}}Number of Clients \\ in each training round\end{tabular} & 5 & 10 & 15 & 20 & 25 \\ \hline
CIFAR10 (\%)              & 91.77 & 91.57 & 91.98 & 91.56 & 91.66 \\ 
CIFAR100 (\%)             & 75.68 & 75.32 & 75.77 & 75.75 & 75.02 \\ \hline
\end{tabular}
\end{table}

\subsection{Participation Ratio}
\label{section 5.7}
The participation ratio indicates the number of clients connecting to the server in each training round. In this experiment, our method follows the aggregation strategy of other FSSL methods to average the client models trained in each round, while Table \ref{tab6} shows that simply increasing this federated learning attribute does not produce a boost in fine-tuning accuracy as in other FSSL methods which uses CNN architecture. The reason why this attribute has no effect to the results is that the pre-train model is very short which only has one block autoencoder, and the learning rate is very small. If we increase the depth of the pre-trained backbone and the participation ratio,  the fine-tuning accuracy may also be improved.

\begin{table}[h!]
\caption{Accuracy with the number of training rounds for pre-training on CIFAR10 and CIFAR100. The pre-training dataset is ImageNet. The accuracy will improve with the number of training rounds and finally reach a stable value.}
\label{tab7}
\centering
\small
\renewcommand\tabcolsep{10.0pt}
\begin{tabular}{lccccc}
\hline
\begin{tabular}[c]{@{}l@{}}Number of \\ Training Rounds\end{tabular} & 200     & 400     & 600     & 800     & 1000    \\ \hline
CIFAR10 (\%)                & 91.44 & 92.43 & 92.49 & 92.17 & 92.76 \\
CIFAR100 (\%)                 & 75.11 & 75.70 & 76.07 & 76.19 & 76.01 \\ \hline\hline
\begin{tabular}[c]{@{}l@{}}Number of \\ Training Rounds\end{tabular} & 1200    & 1400    & 1600    & 1800    & 2000    \\ \hline
CIFAR10 (\%)                  & 92.98 & 92.83 & 92.99 & 93.18 & 93.02 \\
CIFAR100 (\%)                 & 76.42 & 76.26 & 76.20 & 76.46 & 76.48 \\ \hline
\end{tabular}
\end{table}

\subsection{Number of Training Rounds}
\label{section 5.8}
According to Table \ref{tab7}, the fine-tuning accuracy for the downstream tasks increases as the number of training rounds  increases in pre-training. This increase trend is more pronounced in the first 1200 rounds, while eases off in the later rounds. Therefore, for our method FedMAE, we consider a good Nash equilibrium between training cost and fine-tuning accuracy is 1200 training rounds.

\begin{table}[h!]
\caption{Comparison on different model depth between un-pretrained model and FedMAE with pre-trained model cascading. The pre-training dataset is Mini-ImageNet.}
\label{tab4}
\centering
\small
\renewcommand\tabcolsep{4.5pt}
\begin{tabular}{llcccccccccc}
\hline
 \multicolumn{2}{l}{Model Depth}  & 1     & 2     & 3     & 4     & 5     & 6     & 7     & 8     & 9     & 10    \\ \hline
\multirow{2}{*}{CIFAR10}  & Un-pretrained & 66.49 & 71.86 & 75.97 & 78.81 & 80.19 & 81.73 & 82.29 & 82.87 & 82.83 & 83.09 \\
 & Pre-trained & \textbf{72.21} & \textbf{80.48} & \textbf{85.84} & \textbf{88.51} & \textbf{90.02} & \textbf{91.08} & \textbf{91.30} & \textbf{91.55} & \textbf{91.61} & \textbf{91.74} \\
\multirow{2}{*}{CIFAR100} & Un-pretrained & 40.41 & 49.76 & 55.49 & 59.18 & 60.76 & 62.20 & 63.17 & 63.99 & 63.89 & 63.91 \\
 & Pre-trained & \textbf{47.03} & \textbf{60.26} & \textbf{66.93} & \textbf{71.43} & \textbf{73.55} & \textbf{73.46} & \textbf{73.34} & \textbf{73.95} & \textbf{73.96} & \textbf{74.00} \\ \hline
\end{tabular}
\end{table}

\section{Ablation Study}
\label{section 6}
%\subsection{Number of Data on the Server}

\subsection{Model Depth in Downstream Model}
We further conducted additional experiments on the depth of the downstream model. Since the pre-trained model is an MAE with only one block, and its learning ability to extract features is very limited, so we want to know whether the pre-trained one-block encoder has a positive effect on the deeper blocks of a multi-block encoder. Therefore, we cascade different numbers of pre-trained models to compare with models of the same depth but without pre-training. The results are shown in Table \ref{tab4}, which indicate that the pre-trained model still has a great influence on the overall downstream model as the depth increases. Although the increase in model depth would have improved the learning ability of the model, the result is obvious that the one-block pre-trained model improves the learning ability of each block of the downstream model with multi-blocks.

\begin{table}[h!]
\caption{The influence of the pre-trained model depth (i.e. the number of pre-trained Transformer blocks in downstream model) on fine-tuning accuracy. The pre-training dataset is Mini-ImageNet.}
\label{tab8}
\centering
\small
\renewcommand\tabcolsep{1.0pt}
\begin{tabular}{lccccccc}
\hline
\multirow{2}{*}{Model Depth} & \multicolumn{5}{c}{Downstream Model Blocks}                                                        & \multicolumn{2}{c}{Accuracy (\%)} \\
                             & Block 1              & Block 2              & Block 3              & Block 4       & Block 5       & CIFAR10         & CIFAR100        \\ \hline
\multirow{2}{*}{1}           & Un-pretrained        & -                    & -                    & -             & -             & 66.49           & 40.41           \\
                             & \textbf{Pre-trained} & -                    & -                    & -             & -             & \textbf{72.21}  & \textbf{47.03}  \\ \hline
\multirow{3}{*}{2}           & Un-pretrained        & Un-pretrained        & -                    & -             & -             & 71.86           & 49.76           \\
                             & \textbf{Pre-trained} & Un-pretrained        & -                    & -             & -             & 77.28           & 54.90           \\
                             & \textbf{Pre-trained} & \textbf{Pre-trained} & -                    & -             & -             & \textbf{80.48}  & \textbf{60.26}  \\ \hline
\multirow{4}{*}{3}           & Un-pretrained        & Un-pretrained        & Un-pretrained        & -             & -             & 75.97           & 55.49           \\
                             & \textbf{Pre-trained} & Un-pretrained        & Un-pretrained        & -             & -             & 79.97           & 61.04           \\
                             & \textbf{Pre-trained} & \textbf{Pre-trained} & Un-pretrained        & -             & -             & 83.45           & 65.26           \\
                             & \textbf{Pre-trained} & \textbf{Pre-trained} & \textbf{Pre-trained} & -             & -             & \textbf{85.84}  & \textbf{66.93}  \\ \hline
\multirow{5}{*}{4}           & Un-pretrained        & Un-pretrained        & Un-pretrained        & Un-pretrained & -             & 78.81           & 59.18           \\
                             & \textbf{Pre-trained} & Un-pretrained        & Un-pretrained        & Un-pretrained & -             & 82.79           & 64.08           \\
                             & \textbf{Pre-trained} & \textbf{Pre-trained} & Un-pretrained        & Un-pretrained & -             & 85.09           & 67.38           \\
                             & \textbf{Pre-trained} & \textbf{Pre-trained} & \textbf{Pre-trained} & Un-pretrained & -             & 87.67           & 70.19           \\
                   & \textbf{Pre-trained} & \textbf{Pre-trained} & \textbf{Pre-trained} & \textbf{Pre-trained} & -                    & \textbf{88.51} & \textbf{71.43} \\ \hline
\multirow{6}{*}{5} & Un-pretrained        & Un-pretrained        & Un-pretrained        & Un-pretrained        & Un-pretrained        & 80.19          & 60.76          \\
                             & \textbf{Pre-trained} & Un-pretrained        & Un-pretrained        & Un-pretrained & Un-pretrained & 84.61           & 66.04           \\
                             & \textbf{Pre-trained} & \textbf{Pre-trained} & Un-pretrained        & Un-pretrained & Un-pretrained & 85.75           & 68.70           \\
                   & \textbf{Pre-trained} & \textbf{Pre-trained} & \textbf{Pre-trained} & Un-pretrained        & Un-pretrained        & 87.29          & 71.31          \\
                   & \textbf{Pre-trained} & \textbf{Pre-trained} & \textbf{Pre-trained} & \textbf{Pre-trained} & Un-pretrained        & 89.25          & 73.00          \\
                   & \textbf{Pre-trained} & \textbf{Pre-trained} & \textbf{Pre-trained} & \textbf{Pre-trained} & \textbf{Pre-trained} & \textbf{90.02} & \textbf{73.55} \\ \hline
\end{tabular}
\end{table}

\subsection{Depth of Pre-Trained Backbone}
\label{section A.6}
The results of the previous ablation study show that the fine-tuning accuracy is influenced by the depth of the downstream model, and our method FedMAE still has an effect on the downstream model as the depth increases. Based on this finding, we continue to test whether changing the number of pre-trained blocks in the downstream model could also affect the fine-tuning accuracy for downstream tasks. The model depth in this session is from one to five. The experimental results in Table \ref{tab8} demonstrate that every pre-trained Transformer block in the cascaded encoder contributes to the improvement of fine-tuning accuracy. As the depth of the model increases, the more pre-trained blocks are cascaded in downstream models, the higher fine-tuning accuracy the downstream model has.

\section{Conclusion}

We introduce a novel framework FedMAE, addressing the training of large-scale distributed unlabeled datasets in federated learning. Our pre-training model only contains a one-block encoder and a one-block decoder and  uses a masking strategy for training data, so it is easy to deploy in various lightweight clients. Pre-training models in FL can be trained on different clients asynchronously without average aggregation. Pre-trained models are cascaded in the server to build a multi-block downstream model. Theoretical analysis and experimental results on image reconstruction and classification show that our FedMAE achieves a considerable boost compared to previous FSSL methods.

\printbibliography
\end{document}